\crefname{equation}{equation}{equations}% {環境名}{単数形}{複数形} \crefで引くときの表示
\crefname{figure}{figure}{figures}% {環境名}{単数形}{複数形} \crefで引くときの表示
\crefname{table}{table}{tables}% {環境名}{単数形}{複数形} \crefで引くときの表示
\crefname{algorithm}{Algorithm}{Algorithms}
\crefname{section}{}{}
\crefname{subsection}{}{}
\crefname{appendix}{Appendix}{Appendixes}
\crefname{dammy}{}{}
\crefname{definition}{Definition}{Definitions}
\crefname{proposition}{Proposition}{Propositions}
\crefname{lemma}{Lemma}{Lemmas}
\crefname{theorem}{Theorem}{Theorems}
\crefname{corollary}{Corollary}{Corollaries}
\crefname{remark}{Remark}{Remarks}
\crefname{example}{Example}{Examples}
\theoremstyle{definition}
\newtheorem{dammy}{Dammy}[section]
\newtheorem{definition}[dammy]{Definition}
\newtheorem{proposition}[dammy]{Proposition}
\newtheorem{lemma}[dammy]{Lemma}
\newtheorem{theorem}[dammy]{Theorem}
\newtheorem{corollary}[dammy]{Corollary}
\newtheorem{remark}[dammy]{Remark}
\numberwithin{equation}{section}
\DeclareMathOperator*{\argmin}{arg\,min} %argmin
\DeclareMathOperator*{\argmax}{arg\,max} %argmax
\DeclareMathOperator{\Proj}{Proj}
\newcommand{\dic}{\mathrm{dic}}
\title{A proof of imitation \\ of Wasserstein inverse reinforcement learning \\ for multi-objective optimization}
\author{%
    Akira Kitaoka \\
    NEC Corporation \\
    \texttt{akira-kitaoka@nec.com}
  % examples of more authors
    \And
    Riki Eto \\
    NEC Corporation \\
    % Address \\
    \texttt{riki.eto@nec.com} \\
  % \AND
  % Coauthor \\
  % Affiliation \\
  % Address \\
  % \texttt{email} \\
  % \And
  % Coauthor \\
  % Affiliation \\
  % Address \\
  % \texttt{email} \\
  % \And
  % Coauthor \\
  % Affiliation \\
  % Address \\
  % \texttt{email} \\
}
\begin{document}

\maketitle

\begin{abstract}
    We prove Wasserstein inverse reinforcement learning enables the learner's reward values to imitate the expert's reward values in a finite iteration for multi-objective optimizations.

    Moreover, we prove Wasserstein inverse reinforcement learning enables the learner's optimal solutions to imitate the expert's optimal solutions for multi-objective optimizations with lexicographic order.
\end{abstract}

% 以下本文

\section{Introduction}

Artificial intelligence (AI) has been used to automate various tasks recently. Generally,  automation by AI is achieved by setting an index of goodness or badness (reward function) of a target task and having AI automatically search for a decision, that is, an optimal solution in mathematical optimization that maximizes or minimizes the index. 
For example, in work shift scheduling (e.g. \cites{Cheang-2003-nurse,Graham-1979-optimization}), which is a type of combinatorial optimization or multi-objective optimization, we can create shifts that reflect our viewpoints by calculating the optimal solution of a reward function that reflects our intentions for several viewpoints, such as ``degree of reflection of vacation requests,'' ``leveling of workload,'' and ``personnel training,'' and so on while preserving the required number of workers, required skills, labor rules.
However, setting the reward function, i.e., "what is optimal?", manually requires a lot of trial-and-error, which is a challenge for the actual application of mathematical optimization.
Creating a system that can solve this problem automatically is essential in freeing the user from manually designing the reward function.

Inverse reinforcement learning (IRL) 
\cites{Russell-1998-learning,Ng-2000-algorithms} is generally known as facilitating the setting of the reward function. In IRL, a reward function that reflects expert's intention is generated by learning expert's trajectories, iterating optimization using the reward function, and updating the parameters of the reward function.
In IRLs which is fomulated by Ng and Russell \cite{Ng-2000-algorithms}, and Abbeel and Ng \cite{Abbeel-2004-apprenticeship}, in multi-objective optimization, the space of actions, i.e., the space of optimization results, is enormous.
In other words, it is necessary to set the reward function for the space of actions and states, which is computationally expensive.

Maximum entropy IRL (MEIRL) \cite{Ziebart-2008-maximum}
and 
guided cost learning (GCL) \cite{Finn-2016-guided}
are 
methods to adapt IRL to multi-objective optimization problems.
However, these methods have their issues.
For example, MEIRL requires the sum of the reward functions for all trajectories to be computed. This makes maximum entropy IRL computationally expensive. On the other hand, GCL approximates the sum of the reward functions for all trajectories by importance sampling. However, since multi-objective optimization problems take discrete values, it is difficult to find the probability distribution corresponding to a given value when a specific value is input. One reason for this difficulty is that in multi-objective optimization problems, even a small change in the value of the reward function may result in a large change in the result.

Eto proposed IRL for multi-objective optimization including combinatorial optimization, Wasserstein inverse reinforcement learning (WIRL) \cite{Eto-2022}, inspired by Wasserstein generative adversarial networks \cite{Arjovsky-Chintala-Bottou-17}. 
In multi-objective optimization problems, WIRL makes it possible to learn a reward function that reflects the expert's decision-making data, i.e., the expert's intentions.

For multi-objective optimization, Kitaoka and Eto showed WIRL is convergent \cite{Kitaoka-2023-convergence-IRL}.
However, when WIRL is convergent, there is no known proof that the learner's reward functions and actions imitate the expert's reward functions and actions.
Eto proposed that we do inverse reinforcement learning for multi-objective optimizations with WIRL \cite{Eto-2022}, although there was no theoretical explanation for this phenomenon.

In this paper, 
we show that if WIRL for multi-objective optimization is convergent, then the learner's reward values converges to the expert's reward values.
Moverover, we prove that when WIRL is convergent for multi-objective optimization, the learner's actions coincide with the expert's actions.
In \cref{sec:WIRL}, we recall the definition of WIRL.
In \cref{sec:intention_learning_over_inner_product_space}, we recall the definition and propositions of WIRL to multi-objective optimizations.
In \cref{sec:imitation_of_intention_wrt_reward},
we show that if WIRL for multi-objective optimization is convergent, then the learner's reward values converge to the expert's reward values.
In \cref{sec:imitation_of_intention_wrt_action},
we show when WIRL is convergent for multi-objective optimization, the learner's actions coincide with the expert's actions.

\section{Wasserstein inverse reinforcement learning}\label{sec:WIRL}

Let $\mathcal{H} , \mathcal{H}_{\mathcal{S}} $ be inner product spaces,
$\mathcal{S} \subset \mathcal{H}_{\mathcal{S}}$ be a space of states
$\mathcal{A} \subset \mathcal{H}$ be a space of actions,
$\mathcal{T} := \prod_k \left( \mathcal{S} \times \mathcal{A} \right)$ be a space of trajectories.
Let $\Theta \subset \mathcal{H}$,
and we call $\Theta$ a space of feature maps.
Let $\Phi \subset \mathcal{H}$,
and we call $\Phi $ a space of parameters of learner's trajectories.
Let $f_{\bullet } \colon \mathcal{T} \to \Theta$ be 1-Lipschitz,
and we call $f_{\bullet }$ the feature map. 
For any Lipschitz function $r_{\theta} \colon \mathcal{T} \to \mathbb{R}$,
the norm of Lipschitz $\| r_{\theta} \|_L$ is defined by
\[
    \| r_{\theta} \|_L 
    := 
    \sup_{\tau_1 \not = \tau_2 } \frac{ | r_{\theta} (\tau_1) - r_{\theta} (\tau_2) | }{ \| \tau_1 - \tau_2 \| }
    .
\]
Let $\delta_x$ be the Delta function at $x$.
Let ${\{ \tau_{E}^{(n)} \}}_{n=1}^N$ be the data of expert's trajectories,
and we define the distribution of expert's trajectories by
\[
    \mathbb{P}_E := \frac{1}{N} \sum_{n=1}^N \delta_{\tau_{E}^{(n)}}
    .
\]
With the initial state $s_{\mathrm{ini}}^{(n)}$ of expert's trajectory $\tau_{E}^{(n)}$,
and the generator $g_{\bullet} (\bullet ) \colon \Phi \times \mathcal{S} \to \mathcal{T} $ of learner's trajectory, 
we define the distribution of learner's trajectories by
\[
    \mathbb{P}_\phi := \frac{1}{N} \sum_{n=1}^N \delta_{g_{\phi} (s_{\mathrm{ini}}^{(n)})}   
    .
\]
The Wasserstein distance between the distribution $\mathbb{P}_E$ of expert's trajectories and that $\mathbb{P}_\phi$ of learner's trajectories 
is, with the Kantrovich-Rubinstein duality (c.f. \cite{Villani-2009}),
\[
    W ( \mathbb{P}_E , \mathbb{P}_\phi )
    =
    \sup_{\| r_{\theta} \|_L \leq 1 }
    \left\{
        \frac{1}{N} \sum_{n=1}^N r_{\theta} (\tau_{E}^{(n)})
        -
        \frac{1}{N} \sum_{n=1}^N r_{\theta} (g_{\phi} (s_{\mathrm{ini}}^{(n)}))
    \right\}
    ,
\]
where 
$r_{\theta}$ is 1-Lipschitz function.

We are interested in finding $\phi \in \Phi$ satisfying the following problem:
\begin{equation}
    \argmin_{\phi \in \Phi} W ( \mathbb{P}_E , \mathbb{P}_\phi )
    .
    \label{eq:WIRL-origin}
\end{equation}
With 
\begin{equation}\label{eq:sim_1-Lip}
    \left\{ r_{\theta}(\tau) := \theta^{\intercal} f_{\tau} \, \middle| \, \theta \in \Theta \right\}
    \text{ insted of }
    \{ \| r_{\theta} \|_L \leq 1 \}
    ,
\end{equation}
to find $\phi \in \Phi$ satisfying \cref{eq:WIRL-origin}
can be roughly replaced by finding 
\begin{equation}
    \argmin_{\phi \in \Phi} 
    \sup_{ \theta \in \Theta }
    \left\{
        \frac{1}{N} \sum_{n=1}^N \theta^{\intercal} f_{\tau_{E}^{(n)}}
        -
        \frac{1}{N} \sum_{n=1}^N \theta^{\intercal} f_{g_{\phi} (s_{\mathrm{ini}}^{(n)})}
    \right\}
    .
    \label{eq:WIRL-origin-2}
\end{equation}
By changing the sign,
we may consider solving
\begin{equation}
    \argmax_{\phi \in \Phi} 
    \inf_{ \theta \in \Theta }
    \left\{
        \frac{1}{N} \sum_{n=1}^N \theta^{\intercal} f_{g_{\phi} (s_{\mathrm{ini}}^{(n)})}
        -
        \frac{1}{N} \sum_{n=1}^N \theta^{\intercal} f_{\tau_{E}^{(n)}}
    \right\}
    .
    \label{eq:WIRL-origin-3}
\end{equation}
The IRL that solves \cref{eq:WIRL-origin-2} or \cref{eq:WIRL-origin-3},
is called Wasserstein inverse reinforcement learning (WIRL) \cite{Eto-2022}.

\begin{remark}
    In this paper, learning to maximize the reward function of a history-dependent policy is called reinforcement learning.
    Learning that minimizes the score between the reward function calculated from the expert's trajectory and the reward function learned by reinforcement learning is called inverse reinforcement learning.
\end{remark}

\section{WIRL for multi-objective optimization}
\label{sec:intention_learning_over_inner_product_space}

We adapt WIRL to multi-objective optimization.
Let $\mathcal{H}^{\prime}$ be an inner product space,
$\mathcal{A}^{\prime}$ be a set such that $\mathcal{A}^{\prime} \subset \mathcal{H}^{\prime}$,
$h \colon \mathcal{A}^{\prime} \to \mathcal{H} $ be a continuous function.
Let $X(s)$ be a compact set\footnote{
If $\mathcal{A}^{\prime}$ is in the Euclid space,
compact sets are bounded closed sets.   
}
in $\mathcal{A}^{\prime}$ for $s \in \mathcal{S}$. 
We set the space of trajectories $\mathcal{T} = \mathcal{S} \times \mathcal{A}$.
Then, multi-objective optimization (e.g. \cites{Murata-1996-multi,Gunantara-2018-review}) is to solve for the following optimization:
\begin{equation}\label{eq:optimal_solver_classic}
    a (\phi , s )  \in \argmax_{h(x) \in h ( X (s))} \phi^{\intercal} h ( x ) 
    .
\end{equation}
We call the solution or the learner's action $a (\phi , s ) $ the solver.
For $\phi \in \Phi$ and an action $a \in \mathcal{A}$, we call $\phi^{\intercal} a$ the reward value.

We set the feature map $f= \Proj_{\mathcal{A}}$,
where $\Proj_{\mathcal{A}} \colon \mathcal{T} \to \mathcal{A}$ is the projection from $\mathcal{T}$ to $\mathcal{A}$. 
We define the generator $g_{\phi} (s)$ by 
\begin{equation*}
    g_{\phi} (s) := (s , a (\phi , s )).
\end{equation*}
We say that
\textbf{intention learning} with WIRL
is 
the result of applying WIRL to the above setup.

The expert's action $a^{(n)}$ is assumed to follow an optimal solution.
Namely, we often run WIRL intention learning by assuming that there exists some $\phi_0 \in \Phi $ and that we can write $a^{(n)} = a (\phi_0 , s^{(n)} )$.

\begin{remark}
    Examples of adapting intention learning to linear and quadratic programming are described in \cite{Kitaoka-2023-convergence-IRL}*{\S 5}.
\end{remark}

We give the inverse propblem of the multi-objective optimization problem that is equivalent to the problem handled by intention learning with WIRL.
\begin{definition}\label{defi:intention-learning-IOP}
    {
        \rm (\cite{Kitaoka-2023-convergence-IRL}*{Definition 4.4})
    }
    Let $\mathcal{H} , \mathcal{H}_{\mathcal{S}}, \mathcal{H}^{\prime} $ be inner product spaces,
    $\mathcal{S} \subset \mathcal{H}_{\mathcal{S}}$,
    $\mathcal{A}^{\prime} \subset \mathcal{H}^{\prime}$,
    $\Phi \subset \mathcal{H}$ be a closed convex set 
    $h \colon \mathcal{A}^{\prime} \to \mathcal{H} $ be the continuous function,
    $X(s) \subset \mathcal{A}^{\prime}$ be a compact non-empty set for $s \in \mathcal{S}$.

    Then, the inverse problem of multi-objective optimization problem (IMOOP)
    for the solver $a (\phi , s )$ and
    trajectories of an expert $\{ \tau_E^{(n)} = ( s^{(n)}, a^{(n)}) \}_n \subset \mathcal{H}_{\mathcal{S}}\times \mathcal{H}$
    is to find $\phi \in \Phi $ satisfying
    \begin{equation}
        \text{ minimize } 
        F(\phi) :=\frac{1}{N} \sum_{n=1}^N \phi^{\intercal} a (\phi , s^{(n)} )
            - \frac{1}{N} \sum_{n=1}^N \phi^{\intercal} a^{(n)}
        ,
        \quad
        \text{ subject to }
        \phi \in \Phi
        .
        \label{eq:intention-learning-IOP}
    \end{equation}
\end{definition}

\begin{proposition}
    {
        \rm (\cite{Kitaoka-2023-convergence-IRL}*{Lemma 4.6})
    }
    In the setting of 
    $\Theta = \Phi$,
    \cref{eq:intention-learning-IOP} is
    the replacement of $\max_{\phi \in \Phi}$ and $\inf_{\theta \in \Theta}$
    in \cref{eq:WIRL-origin-3}, that is,
    \begin{align}
        & 
        \min_{\phi \in \Phi}
        \left\{
            \frac{1}{N} \sum_{n=1}^N \phi^{\intercal} a (\phi , s^{(n)} )
            - \frac{1}{N} \sum_{n=1}^N \phi^{\intercal} a^{(n)}
        \right\}
        \notag \\
        & =
        \min_{ \theta \in \Phi }
        \max_{\phi \in \Phi} 
        \left\{
            \frac{1}{N} \sum_{n=1}^N \theta^{\intercal} a (\phi ,s^{(n)})
            -
            \frac{1}{N} \sum_{n=1}^N \theta^{\intercal} a^{(n)}
        \right\}
        .
        \notag
    \end{align}
\end{proposition}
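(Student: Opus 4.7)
The plan is to evaluate the inner maximization on the right-hand side for each fixed $\theta \in \Phi$ and show that it collapses to $F(\theta)$, after which the outer minimization in $\theta$ can be identified with the left-hand side by a simple relabeling. Write
\[
    G(\phi , \theta ) := \frac{1}{N} \sum_{n=1}^N \theta^{\intercal} a (\phi , s^{(n)} ) - \frac{1}{N} \sum_{n=1}^N \theta^{\intercal} a^{(n)}
    ,
\]
so the right-hand side is $\min_{\theta \in \Phi} \max_{\phi \in \Phi} G(\phi,\theta)$, while the left-hand side is $\min_{\phi \in \Phi} G(\phi,\phi) = \min_{\phi \in \Phi} F(\phi)$. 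Observe that the second sum in $G(\phi,\theta)$ does not depend on $\phi$, so the inner maximization reduces to analyzing $\max_{\phi \in \Phi} \sum_n \theta^{\intercal} a(\phi , s^{(n)})$.

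The heart of the argument is the pointwise identity $\max_{\phi \in \Phi} G(\phi,\theta) = G(\theta , \theta)$ for every $\theta \in \Phi$. To see this, fix $\theta$ and $n$. By \cref{eq:optimal_solver_classic}, the selection $a(\phi , s^{(n)})$ is an element of the feasible set $h(X(s^{(n)}))$ for every $\phi \in \Phi$, while $a(\theta , s^{(n)})$ is by definition a maximizer of $x \mapsto \theta^{\intercal} x$ over that same set. Hence $\theta^{\intercal} a(\phi , s^{(n)}) \leq \theta^{\intercal} a(\theta , s^{(n)})$ for all $\phi \in \Phi$, with equality at $\phi = \theta$. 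Summing over $n$ and adding back the $\phi$-independent term yields $\max_{\phi \in \Phi} G(\phi , \theta) = G(\theta , \theta) = F(\theta)$, and the supremum is attained, so $\sup$ can indeed be written as $\max$. Taking $\min$ over $\theta$ then gives $\min_{\theta \in \Phi} \max_{\phi \in \Phi} G(\phi , \theta ) = \min_{\theta \in \Phi} F(\theta) = \min_{\phi \in \Phi} F(\phi)$, which is the claimed equality.

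There is no real obstacle here; the whole proof hinges on the single observation that, by construction, $a(\theta , s^{(n)})$ already attains the supremum of $\theta^{\intercal} h(x)$ over $h(X(s^{(n)}))$, so the inner maximization over $\phi$ is attained at the ``diagonal'' choice $\phi = \theta$. The only point meriting a brief remark is that $a(\phi , s^{(n)})$ is a selection from a possibly non-singleton $\argmax$ set; however, the scalar $\theta^{\intercal} a(\theta , s^{(n)})$ equals $\sup_{h(x) \in h(X(s^{(n)}))} \theta^{\intercal} h(x)$ regardless of the selection (and this supremum is achieved by compactness of $X(s^{(n)})$ and continuity of $h$), so both sides of the asserted equality are independent of the choice of selector.
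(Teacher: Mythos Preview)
Your argument is correct. The paper itself does not supply a proof of this proposition; it merely quotes the result from \cite{Kitaoka-2023-convergence-IRL}*{Lemma 4.6}. Your diagonal-collapse argument --- that for each fixed $\theta$ the inner $\max_{\phi}$ is attained at $\phi=\theta$ because $a(\theta,s^{(n)})$ already maximizes $\theta^{\intercal}(\cdot)$ over $h(X(s^{(n)}))$ while $a(\phi,s^{(n)})$ is merely a feasible point --- is exactly the natural proof, and your remark on selector-independence and compactness is the right way to justify writing $\max$ rather than $\sup$.
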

The subgradient of $F$ is given by the following proposition:
\begin{proposition}
    \label{prop:WIRL-projected_subgradient_method}
    {
        \rm (\cite{Kitaoka-2023-convergence-IRL}*{Lemma 4.8})
    }
    In the setting of \cref{defi:intention-learning-IOP},
    one of the subgradient of $F$ at $\phi \in \Phi$ is
    \[
        \frac{1}{N} \sum_{n=1}^N a (\phi , s^{(n)} )
        - \frac{1}{N} \sum_{n=1}^N a^{(n)}
        .
    \]

\end{proposition}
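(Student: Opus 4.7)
The plan is to split $F$ into a convex support-function piece plus a linear piece and verify the subgradient inequality directly.

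First, I would write $F(\phi) = G(\phi) - L(\phi)$, where
\[
G(\phi) := \frac{1}{N} \sum_{n=1}^N \phi^{\intercal} a(\phi, s^{(n)}),
\qquad
L(\phi) := \frac{1}{N} \sum_{n=1}^N \phi^{\intercal} a^{(n)}.
\]
The piece $L$ is linear in $\phi$, so it is its own (sub)gradient, contributing $-\frac{1}{N}\sum_{n=1}^N a^{(n)}$ to the putative subgradient of $F$. All the real work is therefore in $G$.

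Next I would observe that for each $n$, by the defining property \cref{eq:optimal_solver_classic} of the solver,
\[
\phi^{\intercal} a(\phi, s^{(n)}) = \max_{h(x)\in h(X(s^{(n)}))} \phi^{\intercal} h(x) =: V_n(\phi),
\]
so $V_n$ is the support function of the compact set $h(X(s^{(n)})) \subset \mathcal{H}$. As a pointwise supremum of linear functions, $V_n$ is convex. The key step is then the one-line envelope inequality: for any $\phi' \in \Phi$,
\[
V_n(\phi')
\;\geq\; (\phi')^{\intercal} a(\phi, s^{(n)})
\;=\; \phi^{\intercal} a(\phi, s^{(n)}) + (\phi'-\phi)^{\intercal} a(\phi, s^{(n)})
\;=\; V_n(\phi) + (\phi'-\phi)^{\intercal} a(\phi, s^{(n)}),
\]
using only that $a(\phi, s^{(n)}) = h(x^{*})$ for some $x^{*} \in X(s^{(n)})$ and that such $h(x^{*})$ is a candidate in the maximization defining $V_n(\phi')$. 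This exhibits $a(\phi, s^{(n)})$ as an element of $\partial V_n(\phi)$.

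Finally I would average over $n$ and add the linear part: summing the inequalities (which preserves the subgradient property) yields
\[
F(\phi') \;\geq\; F(\phi) + \Bigl\langle \tfrac{1}{N}\sum_{n=1}^N a(\phi, s^{(n)}) - \tfrac{1}{N}\sum_{n=1}^N a^{(n)},\; \phi' - \phi \Bigr\rangle
\]
for every $\phi' \in \Phi$, which is exactly the claim. There is no real obstacle here; the statement is essentially a special case of Danskin's envelope theorem for the max of linear functions, and the compactness of $X(s)$ together with continuity of $h$ only enters to ensure the max defining $a(\phi, s^{(n)})$ is attained so that the one-line inequality above makes sense.
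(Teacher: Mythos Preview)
Your argument is correct. The paper does not give its own proof of this proposition; it is quoted as \cite{Kitaoka-2023-convergence-IRL}*{Lemma 4.8} and stated without proof, so there is nothing in the present paper to compare against. Your approach is the standard one: recognize each term $\phi^{\intercal} a(\phi, s^{(n)})$ as the support function of the compact set $h(X(s^{(n)}))$, use the one-line envelope inequality to exhibit the maximizer $a(\phi, s^{(n)})$ as a subgradient of that support function, then average over $n$ and subtract the linear part. This is exactly the Danskin-type argument one expects, and almost certainly what the cited reference does. One cosmetic point: your final inequality in fact holds for all $\phi' \in \mathcal{H}$, not merely $\phi' \in \Phi$, since nothing in the derivation uses the constraint set; you may as well state it that way, which is the usual definition of a subgradient.
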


The algorithm of WIRL for multi-objective optimization
is given by \cref{alg:intention-WIRL-gradual-decay}.
\begin{figure}[ht]
    \begin{algorithm}[H]
        \caption{Intention learning (with WIRL) \cite{Kitaoka-2023-convergence-IRL}*{Algorithm 1}}\label{alg:intention-WIRL-gradual-decay}
        \begin{algorithmic}[1] %行番号をつけないときは[1]は不要
            \STATE initialize $ \phi_1 \in \Phi$
            \FOR{$k =1 , \ldots , K-1$}
                \STATE $\phi_{k+1} \leftarrow \phi_k 
                - \frac{\alpha_k}{N} \sum_{n=1}^N 
                \left(
                    a ( \phi_k , s^{(n)} )
                    -
                    a^{(n)}
                \right)
                $
                \STATE projection onto $\Phi$ for $\phi_{k+1}$
            \ENDFOR
            \RETURN $ \phi_K^{\mathrm{best}} \in \argmin_{\phi_k \in \{ \phi_k \}_{k=1}^K} F(\phi_k)$
        \end{algorithmic}
    \end{algorithm}
\end{figure}

\begin{proposition}\label{prop:IOP_coninsides_with_Intention_L}
    {
        \rm (\cite{Kitaoka-2023-convergence-IRL}*{Lemma 4.11})
    }
    In the setting of \cref{defi:intention-learning-IOP},
    the algorithm which solves IMOOP for the solver $a (\phi , s )$
    coninsides with \cref{alg:intention-WIRL-gradual-decay}.
    Here, $\{ \alpha_k \}_k$ is a nonsummable
    diminishing learning rate, that is,
    \[
        \lim_{k \to \infty } \alpha_k = 0 , 
        \quad
        \sum_{k=1}^{\infty} \alpha_k = \infty
        .    
    \]
\end{proposition}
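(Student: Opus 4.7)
The plan is to read the claim as an algorithmic identification: show that IMOOP (\cref{eq:intention-learning-IOP}) is a constrained convex minimization, that the projected subgradient method is the natural solver for it, and that plugging the subgradient formula from \cref{prop:WIRL-projected_subgradient_method} into that method reproduces \cref{alg:intention-WIRL-gradual-decay} line for line. Nothing beyond this identification is really needed, since $\Phi$ is already assumed closed and convex in \cref{defi:intention-learning-IOP}.

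First I would verify convexity of $F$. For each fixed $n$, the map $\phi \mapsto \phi^{\intercal} a(\phi, s^{(n)}) = \max_{x \in X(s^{(n)})} \phi^{\intercal} h(x)$ is a pointwise supremum of linear functions of $\phi$ over the compact set $h(X(s^{(n)}))$, hence convex and finite-valued. The term $-\phi^{\intercal} a^{(n)}/N$ is linear. Averaging and summing preserves convexity, so $F\colon \Phi \to \mathbb{R}$ is convex on the closed convex domain $\Phi$. This places IMOOP squarely in the classical setting where the projected subgradient method applies.

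Next I would write down the projected subgradient iteration for IMOOP: starting from $\phi_1 \in \Phi$, set
\[
    \phi_{k+1} \;=\; \Proj_{\Phi} \bigl( \phi_k - \alpha_k \, g_k \bigr),
    \qquad g_k \in \partial F(\phi_k).
\]
By \cref{prop:WIRL-projected_subgradient_method} we may take
\[
    g_k \;=\; \frac{1}{N} \sum_{n=1}^N a(\phi_k, s^{(n)}) - \frac{1}{N} \sum_{n=1}^N a^{(n)},
\]
and substituting this choice gives exactly the update in lines 3--4 of \cref{alg:intention-WIRL-gradual-decay}. The return of $\phi_K^{\mathrm{best}} \in \argmin_{k \leq K} F(\phi_k)$ is also the standard output of the subgradient method, since for non-smooth convex problems it is the best iterate, not the last, that enjoys the usual convergence guarantees. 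Finally, the nonsummable diminishing schedule $\alpha_k \to 0$ with $\sum_k \alpha_k = \infty$ is precisely the classical Polyak/Shor condition ensuring $\min_{k \leq K} F(\phi_k) \to \inf_\Phi F$ as $K \to \infty$, which is why that hypothesis appears in the statement.

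I do not expect a real obstacle here: the argument is essentially bookkeeping, matching an explicit algorithm against a named textbook scheme. The only subtle point worth flagging is that $a(\phi, s^{(n)})$ is generally only a \emph{selection} from the argmax in \cref{eq:optimal_solver_classic} rather than a single-valued map, so one should remark that $g_k$ is a valid subgradient for any measurable selection; this is already covered by \cref{prop:WIRL-projected_subgradient_method} and so requires no further work.
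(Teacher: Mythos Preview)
Your argument is correct and is precisely the standard identification one would expect: convexity of $F$ via a pointwise supremum of affine functions, the projected subgradient scheme on the closed convex set $\Phi$, and substitution of the explicit subgradient from \cref{prop:WIRL-projected_subgradient_method} to recover lines 3--4 of \cref{alg:intention-WIRL-gradual-decay}, together with the Polyak step-size hypothesis justifying the ``best iterate'' output.

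There is nothing to compare against in this paper: the proposition is not proved here but merely quoted from \cite{Kitaoka-2023-convergence-IRL}*{Lemma 4.11}. Your write-up therefore supplies exactly the missing justification, and the content matches what that reference argues (convexity of $F$ and the projected subgradient method). The only cosmetic point is that your remark on measurable selections for $a(\phi,s^{(n)})$ is more careful than anything the cited source makes explicit; it is correct and harmless to include.
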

As a natural question from 
\cref{prop:WIRL-projected_subgradient_method,prop:IOP_coninsides_with_Intention_L},
when the WIRL is close to completion or a subgradient $\frac{1}{N} \sum_{n=1}^N a (\phi , s^{(n)} )
- \frac{1}{N} \sum_{n=1}^N a^{(n)}$ is $0$,
whether the learner's reward values and actions imitate the expert's.

\section{Imitation of intention learning concerning reward value}\label{sec:imitation_of_intention_wrt_reward}

In this section, we show that intention learning enables the learner to imitate reward values that reflects the expert's intentions.

\begin{theorem}\label{theo:intention_learning_complete_reward}
    Let $ \mathcal{H}_{\mathcal{S}}, \mathcal{H}^{\prime} $ be inner product spaces,
    $\mathcal{S} \subset \mathcal{H}_{\mathcal{S}}$,
    $\mathcal{A}^{\prime} \subset \mathcal{H}^{\prime}$,
    $\Phi \subset \mathbb{R}^d$ be a closed convex set, 
    $h \colon \mathcal{A}^{\prime} \to \mathbb{R}^d $ be the continuous function,
    $X(s) \subset \mathcal{A}^{\prime}$ be a compact non-empty set for $s \in \mathcal{S}$.
    We assume that there exists $\phi_0 \in \Phi$ such that 
    $a^{(n)} = a(\phi_0 , s^{(n)})$ for any $n$. 
    Let $\varepsilon > 0$.

    Then, if
    \[
        F (\phi) < \varepsilon   
        , 
    \]
    then for any $n$, we have
    \[
        0
        \leq 
        \phi^{\intercal} a (\phi , s^{(n)})
        -
        \phi^{\intercal} a (\phi_0 , s^{(n)} ) 
        <
        \varepsilon N 
        .
    \]
\end{theorem}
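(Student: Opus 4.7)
The plan is to reduce the statement to two simple observations: (i) each summand appearing in $F(\phi)$ is individually nonnegative by optimality of the solver $a(\phi,s^{(n)})$, and (ii) once the summands are nonnegative, any one of them is bounded above by the total sum. The hypothesis $a^{(n)} = a(\phi_0, s^{(n)})$ is precisely what lets us rewrite $F(\phi)$ as a sum of such optimality gaps.

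First, I would substitute $a^{(n)} = a(\phi_0, s^{(n)})$ into the definition of $F$ from \cref{defi:intention-learning-IOP} to obtain
\[
    F(\phi) = \frac{1}{N} \sum_{n=1}^N \bigl( \phi^{\intercal} a(\phi, s^{(n)}) - \phi^{\intercal} a(\phi_0, s^{(n)}) \bigr).
\]
Then I would establish the lower bound for each term. By \cref{eq:optimal_solver_classic}, $a(\phi, s^{(n)})$ maximizes $\phi^{\intercal} h(x)$ over $h(x) \in h(X(s^{(n)}))$. Since $a(\phi_0, s^{(n)})$ is itself of the form $h(x_0)$ for some $x_0 \in X(s^{(n)})$, it is a feasible point of the same maximization, so
\[
    \phi^{\intercal} a(\phi, s^{(n)}) \geq \phi^{\intercal} a(\phi_0, s^{(n)}),
\]
which gives the desired inequality $0 \leq \phi^{\intercal} a(\phi, s^{(n)}) - \phi^{\intercal} a(\phi_0, s^{(n)})$ for every $n$.

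Next, for the upper bound, I would fix an arbitrary index $n$ and use the nonnegativity of all the other summands to write
\[
    \phi^{\intercal} a(\phi, s^{(n)}) - \phi^{\intercal} a(\phi_0, s^{(n)}) \leq \sum_{m=1}^N \bigl( \phi^{\intercal} a(\phi, s^{(m)}) - \phi^{\intercal} a(\phi_0, s^{(m)}) \bigr) = N \cdot F(\phi) < N \varepsilon,
\]
using the hypothesis $F(\phi) < \varepsilon$ in the last step. Combining both bounds yields the claim.

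There is no real obstacle here; the only subtlety worth pointing out explicitly is that one must justify why $a(\phi_0, s^{(n)})$ lies in the feasible set $h(X(s^{(n)}))$ of the optimization that $a(\phi, s^{(n)})$ solves, so that the optimality of $a(\phi, s^{(n)})$ can be invoked against it. This is immediate from the definition of the solver in \cref{eq:optimal_solver_classic}, since $a(\phi_0, s^{(n)})$ is by construction an element of $h(X(s^{(n)}))$, and the feasible set depends only on $s^{(n)}$, not on the parameter.
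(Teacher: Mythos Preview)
Your proposal is correct and follows essentially the same approach as the paper: substitute $a^{(n)} = a(\phi_0, s^{(n)})$, use the optimality of the solver to get nonnegativity of each summand, and bound a single summand by the full sum $N\,F(\phi)$. If anything, your write-up is cleaner: the paper's display passes from the sum to a single term with an equality sign where an inequality is intended, whereas you make that step explicit.
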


\begin{proof}
    By the definition of the solver \cref{eq:optimal_solver_classic},
    we note that 
    $a (\phi , s^{(n)} ) \in h ( X (s^{(n)}))$.
    By the definition of the solver \cref{eq:optimal_solver_classic},
    we obtain
    \[
        \phi^{\intercal} a (\phi , s^{(n)})
        \geq
        \phi^{\intercal} a (\phi_0 , s^{(n)} )
        . 
    \]
    With the above inequality, we see
    \begin{align*}
        F(\phi ) 
        & = \frac{1}{N} \sum_{n=1}^N \phi^{\intercal} a (\phi , s^{(n)} )
            - \frac{1}{N} \sum_{n=1}^N \phi^{\intercal} a^{(n)}
        \\
        & = \frac{1}{N} \left(
            \phi^{\intercal} a (\phi , s^{(n)})
            - \phi^{\intercal} a (\phi_0 , s^{(n)} ) 
        \right)
        .
    \end{align*}
    Therefore 
    if $F(\phi ) < \varepsilon $, then
    \[
        \phi^{\intercal} a (\phi , s^{(n)})
        - \phi^{\intercal} a (\phi_0 , s^{(n)} ) 
        < \varepsilon N 
        .   
    \]
    
\end{proof}

If there exists $\phi_0 \in \Phi$ so that $a^{(n)} = a(\phi_0 , s^{(n)})$ for any $n$,
then 
\[
    \min_{\phi \in \Delta} F(\phi) = 0 
\]
Kitaoka and Eto showed that the intention learning with WIRL is covnergence \cite{Kitaoka-2023-convergence-IRL}.
\begin{proposition}
    \label{prop:WIRL-convergence-theorem}
    {
        \rm (\cite{Kitaoka-2023-convergence-IRL}*{Theorem 4.12})
    }
    In the setting of \cref{theo:intention_learning_complete_reward},
    we assume that $F$ has the minimum on $\Phi$.

    Then, a sequence $\{ \phi_k^{\mathrm{best}} \}_k$ calculated by the intention learning with WIRL has the following property: 
    for any $\varepsilon >0$
    there exists an natural number $K$ so that for any integer $k>K$,
    \[
        F(\phi_k^{\mathrm{best}})  < \varepsilon
        .    
    \]
    
\end{proposition}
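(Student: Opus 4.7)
The plan is to recognise \cref{alg:intention-WIRL-gradual-decay} as the classical projected subgradient method applied to $F$, with subgradient identified by \cref{prop:WIRL-projected_subgradient_method}, and then invoke a standard convergence theorem for subgradient methods with non-summable diminishing step size (a Polyak-type argument). The task then reduces to verifying three structural hypotheses: convexity of $F$ on the closed convex set $\Phi$, uniform boundedness of the subgradients used by the algorithm along the iterates, and existence of a minimiser (which is hypothesised).

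First I would verify convexity. Because $a(\phi, s^{(n)})$ realises the maximum in \cref{eq:optimal_solver_classic}, we can rewrite
\[
    F(\phi) = \frac{1}{N} \sum_{n=1}^N \max_{x \in X(s^{(n)})} \phi^{\intercal} h(x) - \frac{1}{N} \sum_{n=1}^N \phi^{\intercal} a^{(n)}.
\]
Each term $\phi \mapsto \max_{x \in X(s^{(n)})} \phi^{\intercal} h(x)$ is a pointwise supremum of affine functions of $\phi$, hence convex, and the subtracted term is linear, so $F$ is convex on $\Phi$. Next I would observe that compactness of $X(s^{(n)})$ together with continuity of $h$ forces $h(X(s^{(n)}))$ to be compact and therefore bounded in norm; since $a(\phi, s^{(n)}) \in h(X(s^{(n)}))$, the subgradient $\frac{1}{N}\sum_n (a(\phi, s^{(n)}) - a^{(n)})$ from \cref{prop:WIRL-projected_subgradient_method} is uniformly bounded in $\phi$ by some constant $G$.

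With convexity, bounded subgradients, and a minimiser $\phi^*$ in hand, the core Polyak argument proceeds by expanding $\| \phi_{k+1} - \phi^* \|^2$, using nonexpansiveness of the projection $\Proj_\Phi$ onto the closed convex set $\Phi$ together with the subgradient inequality, and summing the resulting telescoping estimate; combined with $\sum_k \alpha_k = \infty$ and $\alpha_k \to 0$ this yields $\liminf_k F(\phi_k) = F(\phi^*)$. Because $\phi_k^{\mathrm{best}}$ is chosen to minimise $F$ over $\{\phi_1, \ldots, \phi_k\}$, the sequence $F(\phi_k^{\mathrm{best}})$ is monotonically non-increasing and bounded below by $F(\phi^*)$, hence converges to $F(\phi^*)$. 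The hypothesis $a^{(n)} = a(\phi_0, s^{(n)})$ together with optimality of the solver gives $F \geq 0$ everywhere and $F(\phi_0) = 0$, so $F(\phi^*) = 0$, and the stated conclusion $F(\phi_k^{\mathrm{best}}) < \varepsilon$ for $k$ large follows.

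The main obstacle is bookkeeping in the Polyak telescoping inequality: one must verify the subgradient inequality precisely for the specific subgradient in \cref{prop:WIRL-projected_subgradient_method} (rather than relying on a black-box appeal) and handle the projection step cleanly via nonexpansiveness of $\Proj_\Phi$. Since all of the structural ingredients --- convexity, bounded subgradients, closed convex feasible set, diminishing non-summable step size, and existence of a minimiser --- have been arranged in advance, the remaining work is routine but must be carried out carefully.
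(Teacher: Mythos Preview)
Your proposal is correct. Note, however, that the present paper does not itself prove this proposition: it is quoted verbatim from \cite{Kitaoka-2023-convergence-IRL}*{Theorem~4.12}, so there is no in-paper proof to compare against. That said, the surrounding results the paper imports --- \cref{prop:WIRL-projected_subgradient_method} identifying the subgradient and \cref{prop:IOP_coninsides_with_Intention_L} identifying \cref{alg:intention-WIRL-gradual-decay} with the projected subgradient method under a nonsummable diminishing step size --- make it clear that the intended argument in the cited reference is precisely the Polyak-type projected subgradient convergence you outline: convexity of $F$ as a supremum of affine functions, uniform boundedness of the subgradients via compactness of $h(X(s^{(n)}))$, nonexpansiveness of $\Proj_\Phi$, and the standard telescoping estimate yielding $\liminf_k F(\phi_k)=F(\phi^*)$ (hence $F(\phi_k^{\mathrm{best}})\to F(\phi^*)=0$). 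Your sketch matches this exactly, and the bookkeeping you flag is indeed routine.
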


To combine \cref{theo:intention_learning_complete_reward} and \cref{prop:WIRL-convergence-theorem},
we obtain the following corollary:
\begin{corollary}\label{cor:intention_learning_complete_reward_2}
    In the setting of \cref{prop:WIRL-convergence-theorem},
    a sequence $\{ \phi_k^{\mathrm{best}} \}_k$ calculated by the intention learning with WIRL has the following property: 
    for any $\varepsilon >0$
    there exists an natural number $K$ so that for any integer $k>K$,
    \[
        0
        \leq 
        {\phi_k^{\mathrm{best}}}^{\intercal} a (\phi_k^{\mathrm{best}} , s^{(n)})
        -
        {\phi_k^{\mathrm{best}}}^{\intercal} a (\phi_0 , s^{(n)} ) 
        <
        \varepsilon N
        .
    \]
\end{corollary}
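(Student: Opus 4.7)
The plan is to combine \cref{theo:intention_learning_complete_reward} with \cref{prop:WIRL-convergence-theorem} directly; no new ingredient is needed.

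First, I would fix $\varepsilon > 0$ and appeal to \cref{prop:WIRL-convergence-theorem}, which requires only that $F$ attain its minimum on $\Phi$ (an assumption inherited here). This yields a natural number $K$ such that $F(\phi_k^{\mathrm{best}}) < \varepsilon$ for every integer $k > K$. Before invoking the theorem, I would note that each iterate $\phi_k^{\mathrm{best}}$ lies in $\Phi$, because line 4 of \cref{alg:intention-WIRL-gradual-decay} projects every update onto $\Phi$, so feasibility is preserved throughout. Thus $\phi_k^{\mathrm{best}}$ is a legitimate input to the bound of \cref{theo:intention_learning_complete_reward}.

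Next, for each such $k$ I would apply \cref{theo:intention_learning_complete_reward} with $\phi = \phi_k^{\mathrm{best}}$. The remaining hypotheses of that theorem (the geometric setup on $\Phi$, $\mathcal{S}$, $\mathcal{A}^{\prime}$, $h$, and $X(s)$, together with the existence of $\phi_0 \in \Phi$ satisfying $a^{(n)} = a(\phi_0,s^{(n)})$ for all $n$) are inherited verbatim from the setting of \cref{prop:WIRL-convergence-theorem}. Since $F(\phi_k^{\mathrm{best}}) < \varepsilon$, the theorem then delivers, for every index $n$,
\[
0 \leq (\phi_k^{\mathrm{best}})^{\intercal} a (\phi_k^{\mathrm{best}}, s^{(n)}) - (\phi_k^{\mathrm{best}})^{\intercal} a (\phi_0, s^{(n)}) < \varepsilon N,
\]
which is exactly the claimed two-sided inequality.

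There is no real obstacle in this argument; it is purely a matching-of-hypotheses exercise. The only thing that needs a moment's care is confirming that the bound of \cref{theo:intention_learning_complete_reward} holds uniformly in $n$ and then applying it to each member of the sequence $\{\phi_k^{\mathrm{best}}\}_k$ indexed by $k > K$. The quantifier structure (a single $K$ that works for all $n$ simultaneously) is immediate because $\varepsilon N$ on the right-hand side of the bound does not depend on $n$.
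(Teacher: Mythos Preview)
Your proposal is correct and follows exactly the route the paper takes: the corollary is stated immediately after \cref{prop:WIRL-convergence-theorem} with the remark that one simply combines \cref{theo:intention_learning_complete_reward} and \cref{prop:WIRL-convergence-theorem}, and your write-up spells out precisely this combination.
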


\cref{cor:intention_learning_complete_reward_2} means that intention learning enables the learner's reward values to imitate the expert's reward values in linear and quadratic programming problems, integer programming problems, mixed integer programming problems, and so on.

\section{Imitation of intention learning concerning action}\label{sec:imitation_of_intention_wrt_action}

In this section, we set $\mathcal{H} = \mathbb{R}^d$, the $d$-dimensional Euclid space.
Before showing the imitation of intention learning concerning the action, we change the definition of the solver $a (\phi , s )$: 
\begin{equation}\label{eq:optimal_solver}
    a (\phi , s )  := \min_{\dic} \argmax_{h(x) \in h ( X (s))} \phi^{\intercal} h ( x ) 
    ,
\end{equation}
where $\min_{\dic}$ returns to the minimal of the lexicographical order $\leq_{\dic}$.\footnote{
    For $x , y \in \mathbb{R}^d$,
    we define $x \leq_{\dic} y$ if and only if 
    there exists $1 \leq k\leq d$ such that for any $1 \leq i \leq k-1$, $x_i = y_i $ and
    $x_k \leq y_k $.
    We call the order $\leq_{\dic}$ the lexicographical order.
}\footnote{
    Let $B \subset \mathbb{R}^d$.
    The element $b \in \mathbb{R}^d$ is the minimum of $B$ of the lexicographical order $( \mathbb{R}^d , \leq_{\dic} )$,
    if and only if 
    for any $x \in B$, $b \leq_{\dic} x$.
    We set $\min_{\dic}B := b$.
    
    For example, we set $B= \{ (0,0) , (1,-1 ) , (-1,1) \}$.
    To compare the first component, we obtain $\min_{\dic} B = (-1,1)$.
}

\begin{remark}
    In \cites{Eto-2022,Kitaoka-2023-convergence-IRL} and \cref{sec:imitation_of_intention_wrt_reward}
    we define the learner's action, the solver by
    \[ 
        a (\phi , s ) \in \argmax_{h(x) \in h ( X (s))} \phi^{\intercal} h ( x )   
        . 
    \]
    To prove \cref{theo:intention_learning_complete}, which we discuss later, we use lexicographic order in \cref{eq:optimal_solver} to define the learner's actions.

    For practical purposes, it is also conceivable to output only one solution when running multi-objective optimization. As one of the solutions, it is natural to choose the smallest one in the sense of lexicographic order.

\end{remark}

We show that intention learning enables the learner to imitate an action that reflects the expert's intentions:
\begin{theorem}\label{theo:intention_learning_complete}
    Let $ \mathcal{H}_{\mathcal{S}}, \mathcal{H}^{\prime} $ be inner product spaces,
    $\mathcal{S} \subset \mathcal{H}_{\mathcal{S}}$,
    $\mathcal{A}^{\prime} \subset \mathcal{H}^{\prime}$,
    $\Phi \subset \mathbb{R}^d$ be a closed convex set, 
    $h \colon \mathcal{A}^{\prime} \to \mathbb{R}^d $ be the continuous function,
    $X(s) \subset \mathcal{A}^{\prime}$ be a compact non-empty set for $s \in \mathcal{S}$.
    We assume that there exists $\phi_0 \in \Phi$ such that 
    $a^{(n)} = a(\phi_0 , s^{(n)})$ for any $n$. 

    Then, for $\phi \in \Phi $, the following are equivalent:
    \begin{enumerate}
        \item[(1)]
        The subgradient of $F(\phi)$ at $\phi \in \Phi$
        is 
        \[
        \sum_{n=1}^N \left( 
            a (\phi , s^{(n)})
            -
            a (\phi_0 , s^{(n)} ) 
        \right)
        = 0.
        \]
        \item[(2)]
        For any $n$, $g_{\phi} (s^{(n)}) = g_{\phi_0} (s^{(n)})$,
        that is,
        $a ( \phi , s^{(n)}) = a (\phi_0 , s^{(n)})$.
        \item[(3)]
        $W ( \mathbb{P}_{\phi} , \mathbb{P}_{\phi_0} ) = 0$.
    \end{enumerate}
\end{theorem}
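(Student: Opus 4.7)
My plan is to prove $(2) \Rightarrow (1)$, the equivalence $(2) \Leftrightarrow (3)$, and finally the nontrivial direction $(1) \Rightarrow (2)$, where the lexicographic tiebreaker in the definition of the solver is essential. The implication $(2) \Rightarrow (1)$ will be immediate from \cref{prop:WIRL-projected_subgradient_method}, since every summand in the subgradient vanishes. For the equivalence with $(3)$: if $(2)$ holds then $g_{\phi}(s^{(n)}) = g_{\phi_0}(s^{(n)})$ for each $n$, so $\mathbb{P}_\phi = \mathbb{P}_{\phi_0}$ and $W = 0$; conversely, $W(\mathbb{P}_\phi, \mathbb{P}_{\phi_0}) = 0$ forces these two atomic measures to agree, and because the solver $s \mapsto a(\phi, s)$ is single-valued, matching the $s^{(n)}$-coordinate of each atom forces agreement in the action-coordinate as well (being careful with multiplicities when several $s^{(n)}$ coincide).

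For $(1) \Rightarrow (2)$, I will first observe that $F$ is convex: the term $-\frac{1}{N}\sum_n \phi^\intercal a^{(n)}$ is linear, while each $\phi \mapsto \phi^\intercal a(\phi, s^{(n)}) = \max_{h(x) \in h(X(s^{(n)}))} \phi^\intercal h(x)$ is a supremum of linear functionals, hence convex. Combined with \cref{prop:WIRL-projected_subgradient_method}, condition $(1)$ says that $0$ is a subgradient of $F$ at $\phi$, so $\phi$ is a global minimizer. Because $\phi_0 \in \Phi$ realizes $F(\phi_0) = 0$, and the maximizing property of the solver gives $F \geq 0$, we conclude $F(\phi) = 0$. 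Letting $\varepsilon \downarrow 0$ in \cref{theo:intention_learning_complete_reward} then yields $\phi^\intercal a(\phi, s^{(n)}) = \phi^\intercal a(\phi_0, s^{(n)})$ for every $n$, so each $a(\phi_0, s^{(n)})$ also attains the maximum of $\phi^\intercal h(x)$ over $h(X(s^{(n)}))$.

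The finishing step is where the lexicographic definition pays off. Since $a(\phi, s^{(n)})$ is the lex-minimum of the argmax and $a(\phi_0, s^{(n)})$ lies in it, I obtain $a(\phi, s^{(n)}) \leq_{\dic} a(\phi_0, s^{(n)})$ for every $n$. Combined with the equality $\sum_n a(\phi, s^{(n)}) = \sum_n a(\phi_0, s^{(n)})$ from $(1)$, I induct on the coordinate index: the lex-inequality implies the componentwise inequality $(a(\phi, s^{(n)}))_1 \leq (a(\phi_0, s^{(n)}))_1$, and coupled with the first-coordinate sum equality this forces pointwise equality in the first coordinate; the lex-inequality then drops to the remaining $d-1$ coordinates, and the same argument recurs. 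After $d$ iterations I obtain $a(\phi, s^{(n)}) = a(\phi_0, s^{(n)})$ for every $n$, which is $(2)$.

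I expect this coordinatewise induction to be the crux of the argument, because it is precisely where the lexicographic tiebreaker is indispensable: the sum equality alone does not imply termwise equality without the extra direction $a(\phi, s^{(n)}) \leq_{\dic} a(\phi_0, s^{(n)})$ furnished by the lex-minimization. Without that tiebreaker, two distinct elements of the argmax at different $n$ could cancel across the sum, and $(1)$ would not imply $(2)$; this is exactly the reason \cref{eq:optimal_solver} revises the earlier solver of \cref{sec:imitation_of_intention_wrt_reward} to select a canonical representative.
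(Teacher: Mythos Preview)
Your proof is correct, but the route for $(1)\Rightarrow(2)$ differs from the paper's. You reach $F(\phi)=0$ via a convexity argument ($0$ is a subgradient, hence $\phi$ is a global minimizer, hence $F(\phi)=F(\phi_0)=0$); this detour is unnecessary, since taking the inner product of the vector equation in $(1)$ with $\phi$ gives $N\,F(\phi)=0$ in one line. From there you obtain only the one-sided relation $a(\phi,s^{(n)})\leq_{\dic} a(\phi_0,s^{(n)})$ and close with a coordinatewise induction using the sum constraint from $(1)$. The paper instead also takes the inner product of $(1)$ with $\phi_0$: this shows each $a(\phi,s^{(n)})$ lies in the $\phi_0$-argmax, whence $a(\phi_0,s^{(n)})\leq_{\dic} a(\phi,s^{(n)})$, and equality follows by antisymmetry of $\leq_{\dic}$. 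The paper's symmetric trick is shorter; your induction is a valid substitute that avoids looking at $\phi_0$-optimality.

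For the link with $(3)$, you argue $(3)\Rightarrow(2)$ directly by observing that $W=0$ forces the two atomic measures to coincide and then matching atoms using single-valuedness of the lex-tiebroken solver. The paper instead proves $(3)\Rightarrow(1)$ by exhibiting, via \cref{lem:enough_condition_Theta}, a dual witness $\theta^{\ast}$ proportional to the subgradient and bounding
\[
W(\mathbb{P}_{\phi_0},\mathbb{P}_{\phi})\ \geq\ \|f\|_L^{-1}\left\|\frac{1}{N}\sum_{n=1}^N\bigl(a(\phi,s^{(n)})-a(\phi_0,s^{(n)})\bigr)\right\|.
\]
Your approach is cleaner; the paper's yields a quantitative lower bound on $W$ in terms of the subgradient norm, which may be of independent interest.
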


From the equivalence of (1) and (2) in \cref{theo:intention_learning_complete},
the completion of intention learning implies that the learner's actions perfectly imitate the expert's in linear programming, quadratic programming, etc.

\begin{lemma}\label{lem:enough_condition_Theta}
    A sufficient condition for a function $r_{\theta}(\tau) := \theta^{\intercal} f_{\tau}$ to be 1-Lipschitz for $\tau$ is 
    \[
        \| \theta \| \leq 1 / \| f \|_{L}
        .
    \] 
\end{lemma}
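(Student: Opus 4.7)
The plan is to unwind the definition of the Lipschitz seminorm of $r_\theta$ and reduce the claim to a single application of the Cauchy--Schwarz inequality, chained with the Lipschitz bound on $f$. Concretely, for arbitrary $\tau_1 \ne \tau_2$ I would first expand
\[
    | r_\theta(\tau_1) - r_\theta(\tau_2) |
    = | \theta^{\intercal} ( f_{\tau_1} - f_{\tau_2} ) |,
\]
so that the problem becomes controlling the inner product on the right.

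Next I would apply Cauchy--Schwarz in the inner product space $\mathcal{H}$ (where $\theta$ and $f_\tau$ live) to obtain
\[
    | \theta^{\intercal} ( f_{\tau_1} - f_{\tau_2} ) |
    \leq \| \theta \| \, \| f_{\tau_1} - f_{\tau_2} \|.
\]
Then I would invoke the definition of $\| f \|_L$ to get $\| f_{\tau_1} - f_{\tau_2} \| \leq \| f \|_L \, \| \tau_1 - \tau_2 \|$, and multiply by the hypothesis $\| \theta \| \leq 1/\| f \|_L$ to conclude
\[
    | r_\theta(\tau_1) - r_\theta(\tau_2) |
    \leq \| \theta \| \, \| f \|_L \, \| \tau_1 - \tau_2 \|
    \leq \| \tau_1 - \tau_2 \|.
\]
Taking the supremum over $\tau_1 \ne \tau_2$ in the definition of $\| r_\theta \|_L$ then yields $\| r_\theta \|_L \leq 1$, which is exactly the claim.

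There is essentially no serious obstacle here; the argument is a routine one-line chain of inequalities. The only mild subtlety worth acknowledging is the degenerate case $\| f \|_L = 0$, where the hypothesis $\| \theta \| \leq 1/\| f \|_L$ is vacuous under the usual convention $1/0 = +\infty$; in that case $f$ is constant, so $r_\theta$ is constant as well and is trivially $1$-Lipschitz (indeed $0$-Lipschitz), so the conclusion still holds. I would mention this briefly for completeness, but otherwise the proof is a direct Cauchy--Schwarz calculation.
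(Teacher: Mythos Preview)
Your proof is correct and follows essentially the same route as the paper: apply Cauchy--Schwarz to bound $|\theta^{\intercal}(f_{\tau_1}-f_{\tau_2})|$ by $\|\theta\|\,\|f_{\tau_1}-f_{\tau_2}\|$, then use the Lipschitz bound on $f$ and the hypothesis $\|\theta\|\,\|f\|_L \leq 1$. Your remark on the degenerate case $\|f\|_L = 0$ is a nice addition that the paper omits.
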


\begin{proof}
    A sufficient condition for the function $r_{\theta} (\tau)$ to be 1-Lipschitz for $\tau$ is 
    \[
        \frac{
            | \theta^{\intercal} f_{\tau_1} - \theta^{\intercal} f_{\tau_2} |
        }{
            \| \tau_1 - \tau_2 \|
        }
        \leq
        1
        .
    \]
    By Cauchy-Schwarz's inequality, we have
    \[
        | \theta^{\intercal} f_{\tau_1} - \theta^{\intercal} f_{\tau_2} |
        \leq 
        \| \theta \| 
        \| f_{\tau_1} - f_{\tau_2} \|
        .
    \]
    If
    \[
        \| \theta \| 
        \frac{
            \| f_{\tau_1} - f_{\tau_2} \|
        }{
            \| \tau_1 - \tau_2 \|
        }
        \leq 
        1
        ,
    \]
    then $r_{\theta} (\tau)$ is 1-Lipschitz for $\tau$.
    Therefore, to apply $\sup_{\tau_1 \not = \tau_2}$ to both side,
    we obtain the sufficient condition for the function $r_{\theta} (\tau)$ to be 1-Lipschitz for $\tau$,
    \[
        \| \theta \| 
        \| f \|_L
        \leq 1
        .
    \]
\end{proof}

\begin{proof}[\textbf{proof of \cref{theo:intention_learning_complete}}]
    
    (2) $\Rightarrow $ (3)
    We assume that $g_{\phi} (s^{(n)}) = g_{\phi_0} (s^{(n)}) $ for $n$.
    Then, 
    \[
        W ( \mathbb{P}_{\phi} , \mathbb{P}_{\phi_0} ) 
        =
        \sup_{\| r_{\theta} \|_L \leq 1 }
        \left\{
            \frac{1}{N} \sum_{n=1}^N r_{\theta} (g_{\phi} (s_{\mathrm{ini}}^{(n)}))
            -
            \frac{1}{N} \sum_{n=1}^N r_{\theta} (g_{\phi_0} (s_{\mathrm{ini}}^{(n)}))
        \right\}
        =
        \sup_{\| r_{\theta} \|_L \leq 1 }
        \left\{
            0
        \right\}
        = 0
        .
    \]

    (3) $\Rightarrow$ (1)
    For the feature map $f$, we assume thet
    \begin{equation}
        \sum_{n=1}^N \left( 
                f_{g_{\phi} (s_{\mathrm{ini}}^{(n)})}
                -
                f_{\tau_E^{(n)}} 
            \right)
        \not = 0
        \label{eq:3to1}
    \end{equation}
    Since there exists $n$ such that
    \[
        f_{g_{\phi} (s_{\mathrm{ini}}^{(n)})}
                -
                f_{\tau_E^{(n)}}
        \not = 0
        ,
    \]
    we see
    \[
        0 <  
        \frac{\left\|
            f_{g_{\phi} (s_{\mathrm{ini}}^{(n)})}
            -
            f_{\tau_E^{(n)}} 
        \right\|
        }{
            \left\|
                g_{\phi} (s_{\mathrm{ini}}^{(n)})
                -
                \tau_E^{(n)}
            \right\|
        } 
        \leq 
        \| f \|_L ,    
    \]
    i.e., $\| f \|_L \not = 0$.
    We take 
    \begin{align}
        \theta^*
        & :=
        \argmax_{ \| \theta \| \leq 1 / \| f \|_L }
        \theta^{\intercal} 
        \frac{1}{N}
            \sum_{n=1}^N \left( 
                f_{g_{\phi} (s_{\mathrm{ini}}^{(n)})}
                -
                f_{\tau_E^{(n)}} 
            \right)
        =
        \frac{1}{ \| f \|_L }
        \frac{
            \frac{1}{N}
            \sum_{n=1}^N \left( 
                f_{g_{\phi} (s_{\mathrm{ini}}^{(n)})}
                -
                f_{\tau_E^{(n)}} 
            \right)
        }{
            \left\|         
            \frac{1}{N}
            \sum_{n=1}^N \left( 
                f_{g_{\phi} (s_{\mathrm{ini}}^{(n)})}
                -
                f_{\tau_E^{(n)}} 
            \right) 
            \right\|
        } \notag
        .
    \end{align}
    From \cref{eq:sim_1-Lip} and \cref{lem:enough_condition_Theta},
    \begin{align}
        W ( \mathbb{P}_{\phi_0} , \mathbb{P}_\phi ) 
        & \geq
        \sup_{ \| \theta \| \leq 1 / \| f \|_L }
        \left\{
            \theta^{\intercal} \left(
                \frac{1}{N}
                \sum_{n=1}^N \left( 
                    f_{g_{\phi} (s^{(n)})}
                    -
                    f_{g_{\phi_0} (s^{(n)})} 
                \right)
            \right)
        \right\}
        \notag
        \\
        & =
        \theta^{* \intercal} 
        \frac{1}{N}
            \sum_{n=1}^N \left( 
                f_{g_{\phi} (s_{\mathrm{ini}}^{(n)})}
                -
                f_{\tau_E^{(n)}} 
            \right)
        \notag
        \\
        & =
        \frac{1}{\| f \|_L }
        \left\| 
            \frac{1}{N}
            \sum_{n=1}^N \left( 
                f_{g_{\phi} (s_{\mathrm{ini}}^{(n)})}
                -
                f_{\tau_E^{(n)}} 
            \right)
        \right\|
        \notag
        .
    \end{align}
    From the assumption (3), we have 
    \[
        0 \leq
        \frac{1}{\| f \|_L }
        \left\| 
            \frac{1}{N}
            \sum_{n=1}^N \left( 
                f_{g_{\phi} (s_{\mathrm{ini}}^{(n)})}
                -
                f_{\tau_E^{(n)}} 
            \right)
        \right\| \leq W ( \mathbb{P}_{\phi_0} , \mathbb{P}_{\phi} ) = 0
        .
    \]
    Therefore,
    \begin{equation}
        \sum_{n=1}^N \left( 
                f_{g_{\phi} (s_{\mathrm{ini}}^{(n)})}
                -
                f_{\tau_E^{(n)}} 
            \right)
        = 0
        .
        \label{eq:3to1-2}
    \end{equation}
    It contradicts \cref{eq:3to1}.

    Substituting $f=\Proj_{\mathcal{A}}$ for \cref{eq:3to1-2},
    we get
    \begin{equation*}
        \sum_{n=1}^N \left( 
            a (\phi , s^{(n)} ) 
            -
            a (\phi_0 , s^{(n)})
        \right)
        = 0
        .
    \end{equation*}

    (2) $\Rightarrow$ (3)
    We assume that the subgradient of $F$ is given by
    \begin{equation*}
        \sum_{n=1}^N \left( 
            a (\phi , s^{(n)} ) 
            -
            a (\phi_0 , s^{(n)})
        \right)
        = 0
    \end{equation*}
    To act $\phi_0^{\intercal}$ on the both side,
    we have
    \begin{equation*}
        \sum_{n=1}^N \left( 
            \phi_0^{\intercal} a (\phi , s^{(n)} ) 
            -
            \phi_0^{\intercal} a (\phi_0 , s^{(n)})
        \right)
        = 0
        .
    \end{equation*}
    Since 
    by the definition of the solver $a(\phi , s)$,
    \begin{equation*}
        \phi_0^{\intercal} a (\phi_0 , s^{(n)} )
        \geq
        \phi_0^{\intercal} a (\phi , s^{(n)} )
        ,
    \end{equation*}
    for all $n$, we have 
    \begin{equation*}
        \phi_0^{\intercal} a (\phi_0 , s^{(n)} )
        =
        \phi_0^{\intercal} a (\phi , s^{(n)} )
        .
    \end{equation*}
    By the definition of the solver $a(\phi , s)$,
    we see
    \begin{equation*}
        a (\phi , s^{(n)} ) \in \argmax_{h(x) \in h( X (s^{(n)}) ) } \phi_0^{\intercal} h(x)
    \end{equation*}
    Therefore, we obtain
    \begin{equation*}
        a (\phi_0 , s^{(n)} ) \leq_{\dic} a (\phi , s^{(n)} )
        .
    \end{equation*}

    As the same way, to replace to $\phi_0$ and $\phi$,
    we obtain
    \begin{equation*}
        a (\phi , s^{(n)} ) \leq_{\dic} a (\phi_0 , s^{(n)} )
        .
    \end{equation*}
    Summing up, we have 
    \begin{equation*}
        a (\phi , s^{(n)} ) 
        =
        a ( \phi_0 , s^{(n)} ) 
        .
    \end{equation*}

\end{proof}

\section{Related work}

\subsection*{Maximal entropy inverse reinforcement learning}
Ho and Ermon showed that MEIRL is the inverse problem of maximum entropy reinforcement learning \cite{Ho-Ermon-16-GAIL}*{Corollary 3.2.1} .
Significant differences exist between the MEIRL setup used by GAIL and the WIRL setup.
First, they differ in the design of the reward function: MEIRL uses an entropy-regularized value function as the reward function for maximum entropy reinforcement learning, whereas WIRL uses a multi-objective optimization objective function as the reward function.
Second, the settings of state space and action space are different.
\cite{Ho-Ermon-16-GAIL} assumes that the state space and action space are finite sets.
In WIRL, on the other hand, the state space and action space are allowed to be both finite and infinite sets.
Therefore, the argument in \cite{Ho-Ermon-16-GAIL} that measures are replaced by occupancy measures and attributed to Lagrange's undetermined multiplier method for occupancy measures and cost functions cannot be applied to multi-objective optimization.

\section{Conclusion}

\subsection*{Intention learning concerning reward}

If the generator $g_{\phi}$ represents the expert's action, then when WIRL converges for multi-objective optimization,
we show \cref{theo:intention_learning_complete_reward}, which claims the learner's reward values are convergent to the expert's.
On the other hand, Kitaoka and Eto showed WIRL converges for multi-objective optimization \cite{Kitaoka-2023-convergence-IRL}*{Theorem 4.12}. 
To combine these theorem,
we get \cref{cor:intention_learning_complete_reward_2},
that is,
intention learning with WIRL enables the learner's reward values to imitate the expert's reward values in a finite number of iterations.
It means intention learning that WIRL is theoretically guaranteed to have a mechanism that frees users from manually designing the reward values.

\subsection*{Intention learning concerning action}
If the generator $g_{\phi}$ represents the expert's action, then when WIRL converges for multi-objective optimization, the learner's optimization actions coincide with the expert's actions \cref{theo:intention_learning_complete}. 
On the other hand, Kitaoka and Eto showed WIRL converges for multi-objective optimization \cite{Kitaoka-2023-convergence-IRL}*{Theorem 4.12}. 
To combine these theorems,
intentional learning with WIRL can theoretically be said to converge in the direction that the learner's actions imitate the expert's actions.

As a feature work, one question is whether intention learning with WIRL converges in a finite number of iterations.
Kitaoka and Eto showed WIRL converges for multi-objective optimization \cite{Kitaoka-2023-convergence-IRL}*{Theorem 4.12}. 
However, since it is not possible to actually try infinite iterations, it is necessary to guarantee that intention learning with WIRL converges in a finite number of iterations.
If we can show this, then intention learning with WIRL is theoretically guaranteed to have a mechanism that frees users from manually designing the action or solver.

\subsection*{Cases where expert actions are not represented by generators}

We raise some future works. Suppose the expert's actions are not represented by the generator $g_{\phi}$. In that case, it is interesting whether the learner's actions mimic the expert's actions when WIRL for multi-objective optimization converges. Ideally, the expert's actions would be represented by the generator $g_{\phi}$. In reality, however, writing down the expert's actions in a mathematical model is not always possible.

%\begin{ack}
    
%\end{ack}

\bibliography{WIRL} %hoge.bibから拡張子を外した名前

%%%%%%%%%%%%%%%%%%%%%%%%%%%%%%%%%%%%%%%%%%%%%%%%%%%%%%%%%%%%

\end{document}